\documentclass[10pt,journal,compsoc]{IEEEtran}
\usepackage{amsfonts,amsmath,amssymb,amsthm,bm}
\usepackage{graphicx,psfrag,epsf}
\usepackage{enumerate}
\usepackage{adjustbox}
\usepackage{dsfont}
\usepackage{comment}
\usepackage{xcolor}
\usepackage{hyperref}


\newcommand{\E}{\mathbb{E}}

\theoremstyle{plain}
\newtheorem{definition}{Definition}
\newtheorem{theorem}{Theorem}
\newtheorem{cor}{Corollary}
\newtheorem{remark}{Remark}

\newtheorem*{theoremLugosi}{Theorem 2.2, 2.3 in \cite{lugosi1992learning}}

\ifCLASSOPTIONcompsoc
  \usepackage[nocompress]{cite}
\else
  \usepackage{cite}
\fi

\ifCLASSINFOpdf

\else
 
\fi

\hyphenation{op-tical net-works semi-conduc-tor}

\begin{document}

\title{Deep Learning is Provably Robust to Symmetric Label Noise}

\author{Carey E.~Priebe,~Ningyuan~(Teresa)~Huang,~Soledad Villar,~Cong~Mu, Li~Chen
\IEEEcompsocitemizethanks{\IEEEcompsocthanksitem Carey E.\ Priebe
is Professor in the Department of Applied Mathematics and Statistics (AMS), the Center for Imaging Science (CIS), and the Mathematical Institute for Data Science (MINDS), Johns Hopkins University. E-mail: cep@jhu.edu 
\IEEEcompsocthanksitem Ningyuan (Teresa) Huang and Cong Mu are PhD students in the Department of Applied Mathematics and Statistics, Johns Hopkins University. E-mail: nhuang19@jhu.edu, cmu2@jhu.edu
\IEEEcompsocthanksitem Soledad Villar is an Assistant Professor in the Department of Applied Mathematics and Statistics, and the Mathematical Institute for Data Science (MINDS), Johns Hopkins University. E-mail: svillar3@jhu.edu, 
\IEEEcompsocthanksitem Li Chen is Research Scientist with Meta AI. E-mail: lichen66@fb.com \protect
}
}

\markboth{IEEE Transactions on Pattern Analysis and Machine Intelligence (Submitted)}%
{Shell \MakeLowercase{\textit{et al.}}: Bare Demo of IEEEtran.cls for Computer Society Journals}

\IEEEtitleabstractindextext{%
\begin{abstract} 
Deep neural networks (DNNs) are capable of perfectly fitting the training data, including memorizing noisy data. It is commonly believed that memorization hurts generalization. Therefore, many recent works propose mitigation strategies to avoid noisy data or correct memorization. In this work, we step back and ask the question: Can deep learning be robust against massive label noise without any mitigation? We provide an affirmative answer for the case of symmetric label noise: We find that certain DNNs, including under-parameterized and over-parameterized models, can tolerate massive symmetric label noise up to the information-theoretic threshold. By appealing to classical statistical theory and universal consistency of DNNs, we prove that for multiclass classification, $L_1$-consistent DNN classifiers trained under symmetric label noise can achieve Bayes optimality asymptotically if the label noise probability is less than $\frac{K-1}{K}$, where $K \ge 2$ is the number of classes. Our results show that for symmetric label noise, no mitigation is necessary for $L_1$-consistent estimators. We conjecture that for general label noise, mitigation strategies that make use of the noisy data will outperform those that ignore the noisy data.
\end{abstract}



\begin{IEEEkeywords}
Deep Neural Networks, Convolutional Neural Networks, Label Noise
\end{IEEEkeywords}}

\maketitle

\IEEEdisplaynontitleabstractindextext

%
\IEEEpeerreviewmaketitle

\IEEEraisesectionheading{\section{Introduction}\label{sec:introduction}}

\IEEEPARstart{C}{onsider} the classical classification setup \cite[p2]{dgl}: Let $(X,Y), (X_1,Y_1), \cdots (X_n,Y_n) \overset{iid}{\sim} F_{XY}$,
where feature vector $X$ lives in $\Re^d$ and class label $Y$ lives in $[K] = \{1,\cdots,K\}$.
Denote the training data by $\mathcal{T}_n = \{(X_1,Y_1), \cdots (X_n,Y_n)\}$.
Our goal is to learn a classifier $g: \Re^d \times (\Re^d \times [K])^n \to [K]$ using $\mathcal{T}_n$
  to predict the true but unobserved class label $Y$ based on the observed test feature vector $X$.
Performance is measured by the conditional probability of error,
\begin{equation}
    L(g) = \mathbb{P}[g(X;\mathcal{T}_n) \neq Y|\mathcal{T}_n]. \label{eqn:error}
\end{equation}

Now consider the setting wherein we do not observe the $Y_i$ but rather noisy labels $Z_i$.
For $P_i \in [0,1]$,
 let noisy class label $Z_i$ be given by
 $\mathbb{P}[Z_i = Y_i] = 1-P_i$
 and $Z_i$ distributed on $[K] \setminus \{Y_i\}$ with probability $P_{i}$;
 $P_i = 0$ means no noise in the $Z_i$ and $P_i = (K-1)/K$ means no information in the noisy labels $Z_i$. Common label noise structures include class-dependent noise and incident-dependent noise. Class-dependent noise assumes $P_i$ is the same for all instances in the same class, which can be modeled by a noise transition matrix $A \in \Re^{K \times K}$, where $\mathbb{P}[Z_i = l \mid Y_i = k] = A_{kl}$; Symmetric label noise further assumes that $A$ is symmetric with diagonal entries as $1- \alpha$, off-diagonal entries as $\alpha/(K-1)$.

Thus, we have
$(X_i,Y_i,Z_i,P_i) \overset{iid}{\sim} F_{X,Y,Z,P}$.
Again: $X$ is the feature vector and $Y$ is the true class label;
now $Z$ is the noisy class label and $P$ characterizes the label noise. The classifier $g$ is trained on the noisy dataset $\tilde{\mathcal{T}}_n = \{(X_1,Z_1), \cdots (X_n,Z_n)\}$, and evaluated on the clean sample
\begin{equation}
    L(\tilde{g}) = \mathbb{P}[g(X;\tilde{\mathcal{T}}_n) \neq Y|\tilde{\mathcal{T}}_n]. \label{eqn:error}
\end{equation}

It is well known that the optimal classifier is given by the Bayes decision rule:
\begin{equation}
    g^* (x) = \arg \max_{1\le k \le K} \mathbb{P}[Y = k | X=x],
\end{equation}
with the Bayes error given by
\begin{equation}
  L(g^*) = \mathbb{P}[g^*(x) \ne Y] = 1 - \mathbb E [\max_k p_k(X)],
\end{equation}
where $p_k(x) = \mathbb{P}[Y = k | X=x]$ for $k \in [K]$ denotes the a posteriori probabilities.

One natural decision rule is to approximate the a posteriori probability given the training data. In the non-noisy setting, it is well known that if the posterior estimates are $L_1$ (or $L_2$) consistent, then the plug-in Bayes classifier (that maximizes the a posteriori probabilities) is consistent \cite[Section 2.5]{dgl}. However, in the noisy label dataset, one can only hope to estimate the noisy label posterior $q_k(x) = \mathbb{P}[Z = k | X=x] \, (k \in [K])$ via empirical distribution $q_{kn}(x)$. Consider the plug-in classifier again but from noisy $\tilde{\mathcal{T}}_n$,
\begin{equation}
    \tilde{g}_n (x) = \arg \max_k q_{kn}(x). \label{eqn:consistent-g}
\end{equation}

If the posterior estimates are $L_1$-consistent yet for the noisy label posterior $q_k(x)$, how well does the noisy plug-in classifier $\tilde{g}_n(x)$ compared to the Bayes optimal classifier? Remarkably, for binary classification with symmetric label noise, the Bayes decision rule based on noisy posterior $q_k(x)$ remains the same as that of the clean posterior $p_k(x)$ up to the information-theoretic threshold $P_i = 1/2$ \cite{lugosi1992learning, natarajan2013learning, menon2015learning}. Thus, if $q_{kn}(x)$ is a $L_1$-consistent estimator of $q_k(x)$, then $\tilde{g}_n(x)$ yields Bayes-optimal performance asymptotically \cite{lugosi1992learning}.

We now turn to deep neural network classifiers (DNNs) and ask the same question: How well does the noisy plug-in DNN compared to the Bayes optimal classifier? In other words, can DNNs be robust against massive label noise while using noisy posteriors without any mitigation? Empirically, DNNs can memorize arbitrary noisy labels during training and may generalize poorly ~\cite{zhang2021understanding,arpit2017closer}. This phenomenon motivates many follow-up works to design robust deep learning models by mitigating the effect of label noise, including model-free methods that do not explicitly model the noise structure, and model-based methods that assume or estimate the label noise structure (see \cite{algan2021image} for a recent survey).   

\vspace{1em}
\noindent
\textbf{Related work. }
In the model-free literature, recent theoretical results show that imposing regularization on DNNs, such as early stopping \cite{pmlr-v108-li20j} or weight regularizations \cite{liu2020earlylearning, xia2021robust, arpit2017closer}, constrains the model to ignore noisy labels during gradient updates and thus mitigate the effect of label noise. More precisely, \cite{pmlr-v108-li20j} showed that, with early stopping, one hidden-layer fully-connected neural network is robust to label noise up to $ \frac{1}{4(K-1)}$ class-dependent noise probability\footnote{Assume that $K$-class labels lie in $[-1,1]$ and labels from different classes have Euclidean distance at least $\delta$ (i.e., $\delta \le \frac{2}{K-1}$), Theorem 2.2 in \cite{pmlr-v108-li20j} proves robustness up to  noise probability $\frac{\delta}{8} \le \frac{1}{4(K-1)}$.}. Their analysis relies on the key assumptions that the Jacobian of the network has a low-rank structure, which implies the network ``fits the correct labels essentially ignoring the noisy labels.'' as stated in \cite{pmlr-v108-li20j}. However, they conjecture that the tolerance bound can be improved up to the order of $n$ noisy labels.   Similarly, \cite{liu2020earlylearning} observed that ``...early in training, the gradients corresponding to the correctly labeled examples dominate the dynamics---leading to early progress towards the true optimum---but that the gradients corresponding to wrong labels soon become dominant'' and proposed regularization to prevent memorization of noisy labels. 

In the model-based literature, the most relevant work is \cite{patrini2017making}, which shows that by performing loss correction, DNNs can tolerate label noise as long as the noise transition matrix $A$ is invertible (i.e., tolerance threshold up to $\frac{K-1}{K}$ for symmetric noise); Such tighter bound compared to \cite{pmlr-v108-li20j} is obtained with the extra assumption that the label noise is known or can be perfectly estimated from the data.

\vspace{1em}
\noindent
\textbf{Our contribution} 
In this paper, we show that when the symmetric label noise is bounded by $\frac{K-1}{K}$, DNNs trained with noisy data can achieve Bayes optimal performance asymptotically, \textit{without the need for any label noise mitigation}. 
The key observation is that DNNs are universally consistent \cite{farago1993strong, lin2021universal, Drews2022} and thus $L_1$-consistent. This allows us to make use of a generalized version of results in \cite{lugosi1992learning}, extending from the binary setting to the multiclass setting for symmetric label noise. 
We answer the conjecture in \cite{pmlr-v108-li20j} affirmatively in the special setting of symmetric label noise, without requiring the restrictive assumption in \cite{patrini2017making} to perfectly estimate the noise structure. Our results also hold for other $L_1$-consistent estimators, which may be of independent interest.


\section{Main Results} \label{sec:stat}

To prove DNNs trained from symmetric noisy labels can achieve Bayes optimality asymptotically, we first generalize the characterization in \cite[Theorem 2.3]{lugosi1992learning} for binary classification to multiclass classification. We then proceed to show that DNNs are $L_1$-consistent estimator of the (noisy) posteriors based on the universal consistency results of DNNs from \cite{farago1993strong, lin2021universal, Drews2022}. 

To present our main results, we recall the following definitions and key results from \cite{lugosi1992learning}.

\begin{definition}[Consistency]\label{defn:consistent}
Consider the setup introduced in Section \ref{sec:introduction}. A sequence of posterior estimates $\{q_{kn}\}$ is called $L_1$-consistent for a certain distribution $F_{XY}$ if
\begin{equation}
\lim_{n \to \infty} \E( \sum_{k=1}^K |q_{kn}(X) - q_k(X)|) = 0.    \label{eqn: L1-consistency}
\end{equation}

It is called $L_2$-consistent for a certain distribution $F_{XY}$ if
\begin{equation}
\lim_{n \to \infty} \E( \sum_{k=1}^K (q_{kn}(X) - q_k(X))^2)  = 0.    \label{eqn: L1-consistency}
\end{equation}

Universal consistency requires consistency to hold for all distributions $F_{XY}$ with $\E(Y^2) < \infty$.
\end{definition}






\begin{theoremLugosi} \label{thm:lugosi}
Consider the binary classification setting, where $\alpha, \beta$ denote the label noise probability for class $0, 1$ respectively. Let the classifier $\tilde{g}_n(x)$ be defined as \eqref{eqn:consistent-g}, which uses maximizing a posteriori (MAP) decision rule on a $L_1$-consistent estimator $q_{kn}(X)$. Assume $\max(\alpha,\beta) < 1/2$. 
Asymptotically, if $\alpha, \beta$ are known, then 
\begin{equation}
 L(\tilde{g}_n) \to L (g^*); \label{eqn:lugosi_known}
\end{equation}

If $\alpha, \beta$ are unknown, then 
\begin{equation}
 L(\tilde{g}_n) \to L (g^*) \left[1+\frac{2|\alpha-\beta|}{1-2 \max (\alpha, \beta)}\right].      \label{eqn:lugosi_unknown}
\end{equation}

\end{theoremLugosi}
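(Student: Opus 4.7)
The plan is to reduce the theorem to two ingredients: (i) the classical plug-in consistency bound of \cite{dgl} and (ii) a deterministic bias calculation comparing the Bayes rule for $Y$ with the Bayes rule for $Z$. First I would derive the closed-form relationship between the noisy and clean regression functions: conditioning on $Y$ yields
\begin{equation}
q(x) \;=\; (1-\beta)\,p(x) + \alpha\,(1-p(x)) \;=\; \alpha + (1-\alpha-\beta)\,p(x),
\end{equation}
where $p(x) = \mathbb{P}[Y=1 \mid X=x]$. The hypothesis $\max(\alpha,\beta) < 1/2$ gives $1-\alpha-\beta > 0$, so $q$ is a strictly increasing affine function of $p$. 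In particular $\{x : q(x) > 1/2\} = \{x : p(x) > \tau\}$ with $\tau := (1/2 - \alpha)/(1-\alpha-\beta)$, and $\tau = 1/2$ precisely when $\alpha = \beta$.

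For the known-noise statement (\ref{eqn:lugosi_known}) I would invert the affine map and set $\hat p_{kn}(x) := (q_{kn}(x) - \alpha)/(1-\alpha-\beta)$. Because the slope is a positive constant, $L_1$-consistency of $q_{kn}$ for $q$ transfers immediately to $L_1$-consistency of $\hat p_{kn}$ for $p$. Thresholding $\hat p_{kn}$ at $1/2$ is precisely MAP on $q_{kn}$ with the shifted threshold $\tau$, so DGL's plug-in theorem gives $L(\tilde g_n) - L(g^*) \le 2\,\mathbb{E}|\hat p_{kn}(X) - p(X)| \to 0$.

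For the unknown case, $\tilde g_n$ thresholds $q_{kn}$ at $1/2$, so its deterministic target is $g_\tau$, the rule that thresholds $q$ at $1/2$, equivalently $p$ at $\tau$. A variant of the plug-in argument shows $L(\tilde g_n) - L(g_\tau) \to 0$ from $\mathbb{E}|q_{kn}(X) - q(X)| \to 0$: on the disagreement event $\{\tilde g_n \neq g_\tau\}$ one has $|q - 1/2| \le |q_{kn} - q|$, and translating $|1-2p|$ through the affine relation bounds $|L(\tilde g_n) - L(g_\tau)|$ by a vanishing multiple of $\mathbb{E}|q_{kn}-q|$ plus $\mathbb{P}(\tilde g_n \neq g_\tau)$, both of which tend to $0$ (modulo the harmless assumption that $q(X) \neq 1/2$ almost surely, which is treated by a standard approximation). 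The remaining task is to bound the deterministic excess risk
\begin{equation}
L(g_\tau) - L(g^*) \;=\; \mathbb{E}\bigl[\,|2p(X) - 1|\,\mathbf{1}\{p(X) \in I\}\,\bigr],
\end{equation}
where $I$ is the open interval with endpoints $1/2$ and $\tau$, since the two rules disagree only on $I$ and the cost of flipping the decision at $x$ is exactly $|2p(x)-1|$.

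The main obstacle, which produces the explicit constant $C := 2|\alpha-\beta|/(1 - 2\max(\alpha,\beta))$, is the pointwise inequality
\begin{equation}
|2p(x) - 1| \;\le\; C \cdot \min(p(x),\, 1-p(x)) \qquad \text{for all } x \text{ with } p(x) \in I.
\end{equation}
I would verify it by a two-case split on the sign of $\beta - \alpha$. When $\beta > \alpha$, one has $\tau > 1/2$, so on $I$ both $|2p-1| \le 2\tau-1$ and $\min(p,1-p) = 1-p \ge 1-\tau$; a direct computation gives $2\tau - 1 = (\beta-\alpha)/(1-\alpha-\beta)$ and $1-\tau = (1/2 - \beta)/(1-\alpha-\beta)$, so $(2\tau-1)/(1-\tau) = 2(\beta-\alpha)/(1-2\beta) = C$ (using $\max(\alpha,\beta) = \beta$). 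The case $\alpha > \beta$ is symmetric with the classes swapped. Integrating against $F_X$ yields $L(g_\tau) - L(g^*) \le C\cdot\mathbb{E}[\min(p(X),1-p(X))] = C\cdot L(g^*)$, which combined with the plug-in reduction gives (\ref{eqn:lugosi_unknown}).
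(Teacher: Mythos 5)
Your argument is essentially correct, but note that the paper itself offers no proof of this statement: it is quoted from Lugosi (1992) and used as a black box, and the closest in-paper analogue is the proof of Theorem~1, which only treats the symmetric case via the observation that $q_k$ is a positive affine function of $p_k$, so the MAP decisions coincide. Your proposal supplies a self-contained derivation, and its first half (the identity $q = \alpha + (1-\alpha-\beta)p$, the induced threshold $\tau = (1/2-\alpha)/(1-\alpha-\beta)$, and the transfer of $L_1$-consistency through the known affine map together with the Devroye--Gy\"orfi--Lugosi plug-in bound $L(\tilde g_n)-L(g^*)\le 2\,\E|\hat p_{n}(X)-p(X)|$) is exactly the binary specialization of what the paper does for Theorem~1. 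What you add beyond anything in the paper is the asymmetric case: the decomposition through the population rule $g_\tau$, the bias identity $L(g_\tau)-L(g^*)=\E\bigl[|2p(X)-1|\,\mathbf 1\{p(X)\in I\}\bigr]$, and the pointwise inequality $|2p-1|\le C\min(p,1-p)$ on $I$ with $C=2|\alpha-\beta|/(1-2\max(\alpha,\beta))$, whose endpoint computation $(2\tau-1)/(1-\tau)=C$ I checked and which correctly integrates to $L(g_\tau)-L(g^*)\le C\,L(g^*)$.

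Two small points of care. First, the unknown-noise display can only be an asymptotic upper bound (if $p(X)$ never lands in $I$ the limit is $L(g^*)$ itself), so you should read it, and prove it, as $\limsup_n L(\tilde g_n)\le L(g^*)\bigl[1+C\bigr]$, which is what your argument delivers. Second, your intermediate claim $L(\tilde g_n)-L(g_\tau)\to 0$ does require dealing with the tie set $\{q(X)=1/2\}$; rather than assuming it is null, the cleaner patch is to absorb it into the bias term, since $q(x)=1/2$ forces $p(x)=\tau$ and $|2\tau-1|\le C\min(\tau,1-\tau)$, so splitting $\{\tilde g_n\neq g^*\}$ into $\{p\in\bar I\}$ (contributing at most $C\,L(g^*)$) and $\{p\notin\bar I,\ |q-1/2|\le|q_{n}-q|\}$ (whose probability vanishes by Markov plus continuity of measure) gives the bound with no side assumption. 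With that adjustment the proof is complete.
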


In practice, $\alpha, \beta$ are typically unknown. Yet for symmetric label noise (i.e., $\alpha = \beta$), $\tilde{g}_n(x)$ is asymptotically Bayes-optimal until the noise probability exceeds $0.5$. On the other hand, for class-dependent label noise, higher asymmetry implies worse performance --- a constant times the Bayes risk. Therein, we refer to the maximum label noise threshold that preserves Bayes optimality as the statistical limit.

We are ready to present our main results, which extend the binary setting in \cite{lugosi1992learning} to the multiclass setting for symmetric label noise.
\begin{theorem} \label{thm:main}
Consider the multiclass classification setting with $K \ge 2$ classes and symmetric label noise with noise probability $\alpha$. Let the classifier $\tilde{g}_n(x)$ be defined as \eqref{eqn:consistent-g} which uses MAP on a $L_1$-consistent estimator $q_{kn}(X)$. If $\alpha < \frac{K-1}{K}$, then as $n \to \infty$, for both known and unknown $\alpha$,
$$
L(\tilde{g}_n) \to  L (g^*).
$$
\end{theorem}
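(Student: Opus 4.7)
The plan is to reduce Theorem \ref{thm:main} to a direct application of the standard multiclass plug-in consistency bound by exploiting the algebraic structure of symmetric label noise. The key identity is obtained by unpacking the symmetric transition matrix: for each $k \in [K]$,
\[
q_k(x) = (1-\alpha)\,p_k(x) + \frac{\alpha}{K-1}\bigl(1 - p_k(x)\bigr) = c_\alpha\, p_k(x) + \frac{\alpha}{K-1},
\]
where $c_\alpha := \frac{K-1-K\alpha}{K-1}$. The hypothesis $\alpha < (K-1)/K$ is exactly $c_\alpha > 0$, and, crucially, the slope $c_\alpha$ and the offset $\frac{\alpha}{K-1}$ are the \emph{same} for every class $k$. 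Consequently, $\arg\max_k q_k(x) = \arg\max_k p_k(x) = g^*(x)$; the noisy-label Bayes classifier coincides with the clean Bayes classifier. This single observation also explains why Theorem \ref{thm:main} does not distinguish known from unknown $\alpha$: in the symmetric regime the plug-in rule $\arg\max_k q_{kn}(x)$ never uses $\alpha$, so there is nothing to estimate.

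Next, I would apply the textbook plug-in argmax bound (the multiclass analog of the inequality underlying Theorem 2.2/2.3 of \cite{lugosi1992learning} and \cite[Sec.\ 2.5]{dgl}) on the noisy side. Writing $\tilde{g}^*(x) := \arg\max_k q_k(x)$, an elementary pointwise comparison of the argmaxes of $q_{kn}$ and $q_k$ yields
\[
q_{\tilde{g}^*(x)}(x) - q_{\tilde{g}_n(x)}(x) \leq \sum_{k=1}^K |q_{kn}(x) - q_k(x)|.
\]
Taking expectation in $X$ and invoking the assumed $L_1$-consistency of $\{q_{kn}\}$ (Definition \ref{defn:consistent}) sends the right-hand side to zero as $n \to \infty$.

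Finally, I would transfer this convergence from the noisy risk to the clean risk using the affine identity above. Since $p_j(x) - p_k(x) = c_\alpha^{-1}\bigl(q_j(x) - q_k(x)\bigr)$ and $\tilde{g}^* = g^*$, the excess clean risk factors as
\[
L(\tilde{g}_n) - L(g^*) = \mathbb{E}\bigl[p_{g^*(X)}(X) - p_{\tilde{g}_n(X)}(X)\bigr] = c_\alpha^{-1}\, \mathbb{E}\bigl[q_{\tilde{g}^*(X)}(X) - q_{\tilde{g}_n(X)}(X)\bigr],
\]
which is a positive multiple of the quantity already shown to vanish, completing the argument.

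I do not expect a serious technical obstacle: the proof is an algebraic rewriting of the posterior plus a standard plug-in inequality. The main point to handle carefully is that the threshold $\alpha < (K-1)/K$ is tight --- it is exactly the condition that makes the noise transition matrix $A$ invertible, equivalently $c_\alpha > 0$; above the threshold the common argmax structure is destroyed and no plug-in strategy based on $q_k$ alone can recover $g^*$.
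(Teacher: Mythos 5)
Your proposal is correct and follows essentially the same route as the paper: the central observation in both is the affine identity $q_k(x) = c_\alpha\, p_k(x) + \frac{\alpha}{K-1}$ with $c_\alpha > 0$ exactly when $\alpha < \frac{K-1}{K}$ (equivalently, invertibility of $A$), so that $\arg\max_k q_k = \arg\max_k p_k$ and the plug-in rule needs no knowledge of $\alpha$. Your write-up is in fact somewhat more complete than the paper's, since you make the final step explicit via the standard argmax-comparison inequality and the excess-risk factorization $L(\tilde{g}_n) - L(g^*) = c_\alpha^{-1}\,\mathbb{E}\bigl[q_{\tilde{g}^*(X)}(X) - q_{\tilde{g}_n(X)}(X)\bigr]$, where the paper simply invokes $L_1$-consistency and the binary-case result of \cite{lugosi1992learning}.
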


\begin{proof}

Let $\alpha$ denote the noisy label probability (i.e., $P[Z_i = Y_i] = 1 - \alpha$). Observe that the symmetric noise transition matrix is given by \begin{equation}
A_{ij} = \begin{cases}
     1-\alpha,& \text{if } i=j\\
     \frac{\alpha}{K-1},  & \text{otherwise.} \label{eqn:sym_noise}
\end{cases}    
\end{equation}

In the case where $\alpha$ is known (and thus $A$ is known), observe that the noisy posteriors and the true posteriors are related by
\begin{equation}
  [q_{1}(x), \cdots, q_K(x)] = [p_1(x), \cdots, p_K(x)] A.   \label{eqn:transition}
\end{equation}
Therefore, the invertibility of $A$ yields sufficient and necessary condition for estimating the true posteriors $p_k(x)$ from noisy posteriors $q_k(x)$ and thus obtaining the Bayes optimal decision. Further observe that for symmetric label noise,
$$
A = \frac{\alpha}{K-1} \mathbf{1}_{K \times K} + (1-\alpha - \frac{\alpha}{K-1}) I,
$$
where $ \mathbf{1}_{K \times K}$ denotes the all-ones matrix in $\Re^{K \times K}$. Thus, $A$ is invertible if and only if $1-\alpha - \frac{\alpha}{K-1} > 0$. In other words, the noisy plug-in classifier can tolerate label noise up to the breakdown point at $\alpha = \frac{K-1}{K}$. When $K=2$, we recover eqn \eqref{eqn:lugosi_known} in \cite[Theorem 2.2]{lugosi1992learning}.

In the case where $\alpha$ and thus $A$ are unknown (while the form of $A$ is known as eqn \eqref{eqn:sym_noise}), we can write the true posterior as a function of the noisy posterior using eqn \eqref{eqn:transition}, 
\begin{align}
    p_k(x) &= (1- \alpha - \frac{\alpha}{K-1})^{-1} (q_k(x) - \frac{\alpha}{K-1}) . \label{eqn:mono}
\end{align}
When $\alpha < \frac{K-1}{K}$, the coefficient $(1- \alpha - \frac{\alpha}{K-1})^{-1} > 0$ and so $p_k(x)$ is monotonically increasing with $q_k(x)$. Therefore, by monoticity, if we know the noisy posteriors such that $q_1(x) \ge \ldots \ge q_K(x)$, then $p_1(x) \ge \ldots \ge p_K(x)$. In other words, the noisy decision coincides with the Bayes decision $\arg \max_{k \in [K]} q_k(x) = \arg \max_{k \in [K]} p_k(x)$. Now, since the classifier $\tilde{g}_n(x)$ is a $L_1$-consistent estimator, then the empirical noisy posterior $q_{kn}(x) \to q_k(x)$ when $n \to \infty$, so we can estimate the noisy posterior perfectly in the asymptotic limit, and obtain the Bayes optimal performance.
\end{proof}

\begin{remark} \label{rem1}
Even when the noise probability is unknown, symmetric label noise (up to the information-theoretic threshold) effectively maintains the ordering of the true posteriors, and therefore leads to Bayes optimality based on the noisy posteriors. However, class-dependent label noise typically leads to sub-optimality, as shown in eqn \eqref{eqn:lugosi_unknown} for binary classification and further discussed in \cite{scott2013classification}. A natural mitigation strategy relies on estimating $A$ from data \cite{patrini2017making, menon2015learning}: if $A$ can be perfectly recovered from data, then it is possible to achieve Bayes optimality for unknown, class-dependent label noise, as shown in \cite[Thm 3]{patrini2017making}.
\end{remark}

\begin{remark} \label{rem2}
Theorem \ref{thm:main} is applicable for any $L_1$-consistent estimator. For example, Adaboost is universally consistent when using appropriate early-stopping and sufficiently rich base learners \cite{bartlett2006adaboost}. Therefore it can tolerate massive symmetric label noise. However, Adaboost without consistency guarantees is highly susceptible to symmetric label noise \cite{long2008random}.
\end{remark}

\begin{remark} \label{rem3}
Although $L_1$-consistency is sufficient to derive robustness against label noise, it is not necessary. For example, \cite{ghosh2017robustness} show that decision tree based on Gini impurity splitting can achieve the label noise tolerance up to the statistical limit, while such estimator is not universally consistent \cite[P338]{dgl}.
\end{remark}

It remains to show that DNNs are $L_1$-consistent. Observe that $L_2$-consistency implies $L_1$-consistency \cite[DGL Cor 6.2]{dgl}, since for each $k \in [K]$,
\begin{align}
 \E(|q_{kn}(X) - q_k(X)|) &= \int_{\Re^d} |q_{kn}(x) - q_k(x)| \mu(dx) \nonumber \\
 & \le \Big( \int_{\Re^d} |q_{kn}(x) - q_k(x)|^2 \mu(dx) \Big)^{1/2}. \label{eqn:L2-consistency}  
\end{align}

Thus, our results are immediate from the ($L_2$) universal consistency results of DNNs from  \cite{farago1993strong, lin2021universal, Drews2022}. More precisely, universal consistency of under-parameterized neural networks was established in \cite{farago1993strong, barron1994approximation} for fully-connected neural networks and \cite{lin2021universal} for convolutional neural networks (CNNs), whereby \textit{under-parameterized} we mean that in the asymptotic limit, the ratio of the number of parameters of the DNN and the number of data samples is less than $1$. This is in contrast to the \textit{over-parameterized} networks where such ratio is greater than $1$. Remarkably, \cite{Drews2022} recently show that even over-parameterized networks can also be universally consistent, given proper setup in the gradient descent optimization (e.g., initialization, step size, and the number of iterations). To conclude, we establish the following:
\begin{cor}
Consider the $K$-class classification setting in Theorem \ref{thm:main} where the classifier $\tilde{g}_n(x)$ is a $L_1$-consistent deep neural network (DNN). If the symmetric noise probability $\alpha < \frac{K-1}{K}$, then such DNN trained from noisy data without mitigation can achieve Bayes optimality asymptotically.
\end{cor}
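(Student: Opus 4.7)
The plan is to obtain the corollary as an immediate consequence of Theorem \ref{thm:main}, once I verify that a DNN trained on the noisy sample $\tilde{\mathcal{T}}_n = \{(X_i, Z_i)\}_{i=1}^n$ produces posterior estimates $\{q_{kn}\}$ that are $L_1$-consistent for the noisy conditional distribution $q_k(x) = \mathbb{P}[Z = k \mid X = x]$. Once that is in hand, the hypotheses of Theorem \ref{thm:main} are satisfied, and the conclusion $L(\tilde{g}_n) \to L(g^*)$ follows for every $\alpha < \tfrac{K-1}{K}$.

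First I would invoke the universal consistency theorems cited just before the statement of the corollary. Whether the architecture is under-parameterized (as in \cite{farago1993strong, lin2021universal}) or over-parameterized with an appropriate gradient-descent schedule (as in \cite{Drews2022}), each of these results yields $L_2$-convergence of the network's posterior estimate to the true conditional class probability, for every joint distribution on (feature, label) having a finite second moment in the label. I would apply these results with the roles of $Y$ and $Z$ interchanged---legitimate because $(X_i, Z_i)$ is i.i.d.\ from $F_{XZ}$ and $Z \in [K]$ is bounded---to conclude that, for each $k \in [K]$, $\mathbb{E}[(q_{kn}(X) - q_k(X))^2] \to 0$.

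Next I would upgrade this $L_2$-convergence to the $L_1$-convergence required by Definition \ref{defn:consistent}. This is precisely the computation already displayed in equation \eqref{eqn:L2-consistency} of the paper: applying Cauchy--Schwarz with respect to the marginal $\mu$ of $X$ gives $\mathbb{E}[|q_{kn}(X) - q_k(X)|] \le \bigl(\mathbb{E}[(q_{kn}(X) - q_k(X))^2]\bigr)^{1/2} \to 0$, and summing the $K$ terms produces the hypothesis of Theorem \ref{thm:main}. Invoking that theorem then closes the argument.

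The main obstacle, such as it is, is conceptual rather than technical: one must recognize that universal consistency is a statement about an \emph{arbitrary} joint distribution over features and labels, so that applying it to $(X, Z)$ in place of $(X, Y)$ costs nothing beyond observing that $Z$ is bounded. Once this observation is made, no analysis of training dynamics, Jacobian structure, or the noise transition matrix is required; robustness to symmetric label noise up to the information-theoretic threshold is already baked into the combination of universal consistency of DNNs and Theorem \ref{thm:main}.
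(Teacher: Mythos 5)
Your proposal is correct and follows essentially the same route as the paper: invoke the universal ($L_2$) consistency results of \cite{farago1993strong, lin2021universal, Drews2022} for the network's posterior estimates, upgrade to $L_1$-consistency via the Cauchy--Schwarz bound in \eqref{eqn:L2-consistency}, and then apply Theorem \ref{thm:main}. Your explicit remark that universal consistency applies to the noisy pair $(X,Z)$ just as well as to $(X,Y)$ is a point the paper leaves implicit, but it is the same argument.
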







\section{Numerical Evidence}
To demonstrate our results, we conduct numerical simulations on training CNNs on noisy benchmark datasets (see Appendix \ref{app} for full details). As shown in Figure \ref{fig:gap}, when training with symmetric label noise, the classification performance degrades very slowly until the statistical limit $(K-1)/K$ (yellow dotted line), whereas the tolerance bound $1/4(K-1)$ (grey dotted line) in \cite{pmlr-v108-li20j} is much looser. Similar empirical evidence can be found in \cite{markermap2022} that shows variational auto-encoder classifiers are robust to symmetric label noise up to the statistical limit. 

\begin{figure}[htb!]
\centering
\includegraphics[width=7cm]{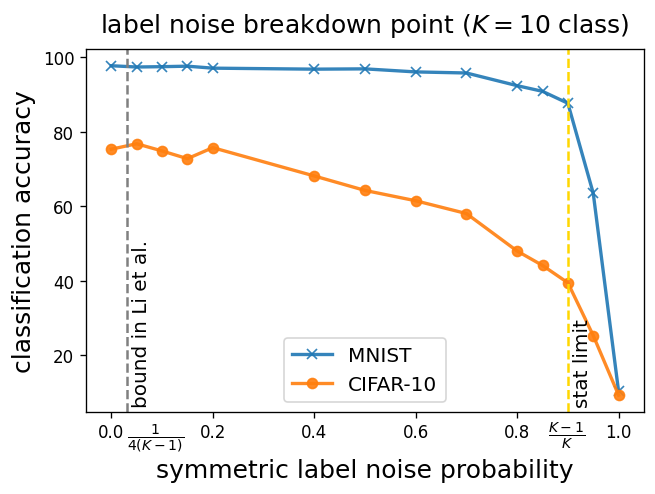}
\caption{CNNs trained with \textit{symmetric} label noise on MNIST and CIFAR10 datasets.
Experimental results agree with the statistical analysis, and demonstrate deep learning models can be surprisingly robust against massive symmetric label noise.}
\label{fig:gap}
\end{figure}

As discussed in Remark \ref{rem1}, class-dependent label noise can be more harmful than symmetric label noise. We illustrate such phenomenon in Figure \ref{fig:gap_asym}, where the class-dependent noise transition matrix is given by
\begin{equation}
A_{ij} = \begin{cases}
     1-\alpha,& \text{if } i=j\\
     \alpha,  & \text{if } j=(i+1)\;\mathrm{mod}\;10 \\ 
     0 & \text{otherwise.} \label{eqn:asym_noise}
\end{cases}    
\end{equation}

Note that each row of $A$ in \eqref{eqn:asym_noise} only has two nonzero entries, and thus such class-dependent noise effectively reduces the multiclass problem to the binary setting (conditional on each class). Yet when $K=2$, the statistical limit $1/2$ is still more optimistic than the tolerance bound $1/4$ in \cite{pmlr-v108-li20j}, and achievable as shown in Figure \ref{fig:gap_asym}.

\begin{figure}[htb!]
\centering
\includegraphics[width=7cm]{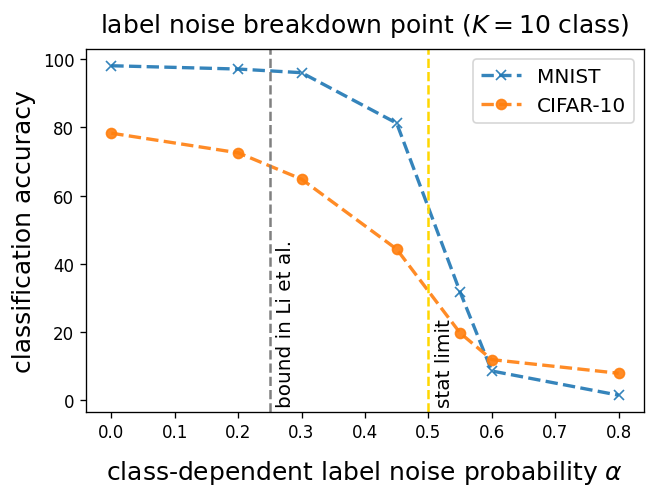}
\caption{CNNs trained with \textit{class-dependent} label noise on MNIST and CIFAR10 datasets. Here, each class label $k \in \{1, \ldots, 10\}$ is flipped to class $(k+1)\;\mathrm{mod}\;10$ with probability $\alpha$, and remains unchanged with probability $1-\alpha$. This class-dependent label noise structure effectively reduces the multiclass setting to the binary setting. Yet our statistical limit is still achievable and tighter than the bound in \cite{pmlr-v108-li20j}.}
\label{fig:gap_asym}
\end{figure}

\section{Discussion}

This short note establishes the statistical limits of deep learning classifiers trained with label noise: Deep neural networks can be surprisingly robust against symmetric label noise without mitigation. Such robustness guarantees hold for any $L_1$-consistent DNN, including both under-parameterized and over-parameterized models. Empirical simulations confirm that the statistical limit is achievable. 

We hope that the statistical limit might provide an impetus for efforts to understand deep learning against label noise. One interesting direction is to investigate whether we can relax the $L_1$-consistency necessary condition. Our numerical experiments suggest this is plausible ($L_1$-consistency was not enforced in the models), and Remark \ref{rem3} points out a potential path by connecting ReLU-based DNNs to partition-based methods such as decision trees. 

In future work, we aim to study the statistical limit under general label noise structure, including class-dependent and incident-dependent noise. Based on our current results, we conjecture that mitigation strategies that make use of the noisy data, such as using them to estimate the noise structure, will outperform those that ignore the noisy data.

\section*{Acknowledgements}

The authors thank George A Kevrekidis, Joshua Agterberg, and Youngser Park for their valuable comments on the paper. Cong Mu and Teresa Huang are partially supported by the Johns Hopkins Mathematical Institute for Data Science (MINDS) Data Science Fellowship. Soledad Villar is supported by NSF DMS 2044349, EOARD FA9550-18-1-7007, and NSF-Simons MoDL (NSF DMS 2031985).

\ifCLASSOPTIONcaptionsoff
  \newpage
\fi



%

\bibliographystyle{IEEEtran}

\bibliography{sample}

\clearpage

\appendices

\section{Experiment Set-up}  \label{app}

To empirically verfiy the tightness of the statistical limit, we use the following standard datasets and convolutional neural network architectures for image classification with symmetric label noise.

\hspace{0.5em}

\noindent
\textbf{MNIST}~\cite{lecun1998gradient}. 
\begin{itemize}
    \item Data: The collection of grey-scale handwritten digit images (10-class) of size 28x28, with a training set of 60000 examples and a test set of 10000 examples.
    \item Architecture: CNNs with two convolution layers followed by two fully connected layers.
\end{itemize}

\noindent
\textbf{CIFAR10}~\cite{krizhevsky2009learning}.
\begin{itemize}
    \item Data: Labeled subset (10-class) of the 80 million color images of size 32x32, with a training set of 50000 examples and a test set of 10000 examples.
    \item Architecture: CNNs with three convolutional blocks (each consists of two convolutional layers and one pooling layer) followed by three fully connected layers. 
\end{itemize}

\noindent
\textbf{Label noise.} We use the following sequence of label noise probabilities:
\begin{itemize}
    \item Symmetric: 
$\{0, 0.05, 0.1, 0.15, 0.2, 0.4, 0.5, 0.6, 0.7, \\ 0.8, 0.85, 0.9, 0.95, 1\}$;
  \item Class-dependent: 
$\{0,0.2,0.3,0.45,0.55,0.6,0.8\}.$
\end{itemize}

\noindent
\textbf{Training and Evaluation.} 
The transformation for the input images follow the recommended steps for pre-trained models in \texttt{PyTorch}~\cite{paszke2019pytorch}. For all experiments, we use  stochastic gradient descent with batch size of 64, learning rate of 0.01. For MNIST, the networks are trained for 3 epochs using momentum 0.5; For CIFAR10, the networks are trained for 10 epochs using momentum 0.99. We evaluate the trained models on the original test dataset without label noise.

\end{document}